\newcommand{\Parens}[1]{\ensuremath{\left(#1\right)}}
\newcommand{\compactminus}{\mathord-}
\newcommand{\compactplus}{\mathord+}
\pgfplotsset{compat=1.15}
\definecolor{myblue}{RGB}{0,58,109}
\definecolor{myred}{RGB}{175,0,0}
\title{Algorithms for Weighted Pushdown Automata}
\newcommand{\mailto}[2]{\texttt{\href{mailto:#1}{#2}}}
\author{Alexandra Butoi$^1$ \quad
Brian DuSell$^2$ \quad
Tim Vieira$^3$ \\
\textbf{Ryan Cotterell$^1$ \quad
David Chiang$^2$} \\[1ex]
  $^1$ETH Zürich \quad
  $^2$University of Notre Dame \quad
  $^3$Johns Hopkins University \\[1ex]
    \mailto{alexandra.butoi@inf.ethz.ch}{alexandra.butoi@inf.ethz.ch} \quad
    \{\mailto{bdusell1@nd.edu}{bdusell1}, 
    \mailto{dchiang@nd.edu}{dchiang}\}\texttt{@nd.edu} \\\
    \{\mailto{tim.f.vieira@gmail.com}{tim.f.vieira},
    \mailto{ryan.cotterell@gmail.com}{ryan.cotterell}\}\texttt{@gmail.com}
}
\begin{document}
\maketitle

\begin{abstract}
Weighted pushdown automata (WPDAs) are at the core of many natural language processing tasks, like syntax-based statistical machine translation and transition-based dependency parsing. As most existing dynamic programming algorithms are designed for context-free grammars (CFGs), algorithms for PDAs often resort to a PDA-to-CFG conversion. In this paper, we develop novel algorithms that operate directly on WPDAs. Our algorithms are inspired by Lang's algorithm, but use a more general definition of pushdown automaton and either reduce the space requirements by a factor of $|\stackalphabet|$ (the size of the stack alphabet) or reduce the runtime by a factor of more than $|\states|$ (the number of states). When run on the same class of PDAs as Lang's algorithm, our algorithm is both more space-efficient by a factor of $|\stackalphabet|$ and more time-efficient by a factor of $|\states| \cdot |\stackalphabet|$.

\vspace{0.5em}

\centering
\mbox{\includegraphics[width=1.25em,height=1.25em]{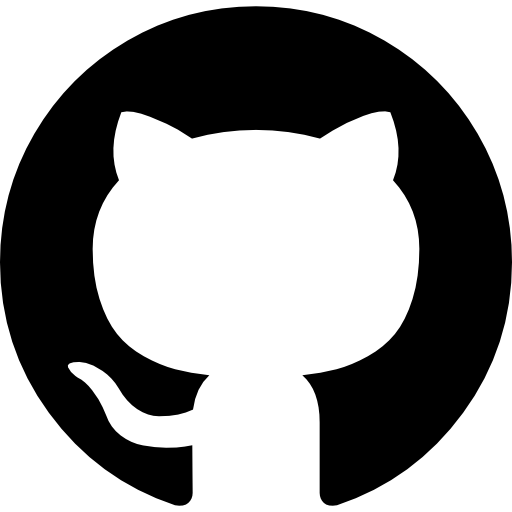}\hspace{0.5em}
\raisebox{0.3em}{\url{https://github.com/rycolab/wpda}}}
\end{abstract}

\section{Introduction}

Weighted pushdown automata (WPDAs) are widespread in natural language processing (NLP), primarily in syntactic analysis. 
For instance, WPDAs have found use in syntax-based statistical machine translation \citep{allauzen-etal-2014-pushdown}, and many transition-based dependency parsers \cite{nivre-2004-incrementality, chen-manning-2014-fast, weiss-etal-2015-structured, dyer-etal-2015-transition, andor-etal-2016-globally, shi-etal-2017-fast, ma-etal-2018-stack, fernandez-gonzalez-gomez-rodriguez-2019-left} are special cases of WPDAs.
In addition, PDAs have been used in computational psycholinguistics as models of human sentence processing \citep{resnik-1992-left}.
Despite their ubiquity, there has been relatively little research on the theory of WPDAs themselves.
In some ways, WPDAs are treated as second-class citizens compared to their equivalent 
cousins, weighted context-free grammars (WCFGs), for which a variety of dynamic programming algorithms exists \cite{bar-hillel-1961-formal,earley-1970-efficient, stolcke-1995-efficient}.
To help fill this gap, this paper offers several new and improved algorithms for computing with WPDAs.

\begin{figure}
\centering
\begin{tikzpicture}[x=2.5cm,y=1.75cm]
\node[align=center](extpda) at (0,0) {Extended\\PDA};
\node[align=center](topdown) at (0.5,1) {Top-down \\ PDA};
\node[align=center](bottomup) at (0.5,-1) {Bottom-up \\ PDA};
\node[align=center](topdownnf) at (1.6,1) {Top-down \\ NF PDA};
\node[align=center](lang) at (1.6,0) {\Simple{} \\ PDA};
\node[align=center](bottomupnf) at (1.6,-1) {Bottom-up \\ NF PDA};
\node(stringsum) at (2.4,0) {Stringsum};
\node(cfg) at (0.5,1.75) {CFG};
\node(cnf) at (1.6,1.75) {Chomsky NF};
\draw[->,bend right=10] (extpda) edge node[sloped,below,pos=0.4] {\tiny\cref{thm:pda_to_topdown}} (topdown);
\draw[->,dashed,bend left=10] (extpda) edge node[sloped,above,pos=0.4] {\tiny Aho-Ullman} (topdown);
\draw[->] (extpda) -- node[sloped,below,pos=0.4] {\tiny\cref{thm:pda_to_bottomup}} (bottomup);
\draw[->] (topdown) -- node[sloped,above] {\tiny\cref{thm:topdown_nf}} (topdownnf);
\draw[->] (bottomup) -- node[sloped,below] {\tiny\cref{thm:bottomup_nf}} (bottomupnf);
\draw[->] (topdownnf) -- node[sloped,above]{\tiny\cref{algo:fast-top-down-parsing}} (stringsum);
\draw[->,dashed] (lang) -- node[above]{\tiny Lang} (stringsum);
\draw[->] (bottomupnf) -- node[sloped,below] {\tiny\cref{algo:fast-bottom-up-parsing}} (stringsum);
\draw[->,dashed] (topdown) -- (cfg);
\draw[->,dashed] (cfg) -- (cnf);
\draw[->,dashed] (topdownnf) -- (cnf);
\draw[->,dashed,bend left] (cnf) edge node[sloped,above]{\tiny CKY} (stringsum);
\draw[draw=none] (lang) -- node[sloped]{$\subset$} (topdownnf);
\draw[draw=none] (lang) -- node[sloped]{$\subset$} (bottomupnf);
\end{tikzpicture}
\caption{Roadmap of the paper. Solid lines are new results in this paper; dashed lines are old results. We are aware of two existing methods for PDA stringsums, via CFG and via Lang's algorithm; our algorithms are faster and/or more general than both.}
\label{fig:overview}
\end{figure}

Figure~\ref{fig:overview} gives an overview of most of our results.
We start by defining a weighted version of the extended PDAs of \citet[p.~173]{aho+ullman:1972} and two special cases: the standard definition \citep{hopcroft-2006-introduction}, which we call top-down, and its mirror image, which we call bottom-up.
Both top-down and bottom-up WPDAs have been used in NLP\@.
\citeposs{roark-2001-probabilistic} generative parser is a top-down PDA as is \citeposs{dyer-etal-2016-recurrent}.
Most transition-based dependency parsers, both arc-standard \cite{nivre-2004-incrementality, huang-etal-2009-bilingually} and arc-eager \cite{nivre-2003-efficient, zhang-clark-2008-tale}, are bottom-up WPDAs.

Next, we give a normal form for WPDAs analogous to Chomsky normal form,
and we derive new dynamic programming algorithms to compute the weight of a string under top-down and bottom-up WPDAs in normal form.
We are only aware of one previous recognition algorithm for PDAs, that of \citet{lang-1974-deterministic}, which we generalize to the weighted case and improve in the following ways:
\begin{compactitem}
\item On PDAs more general than those Lang considers,
our algorithm is more space-efficient by a factor of $|\stackalphabet|$ (the stack alphabet size);
\item We can speed up our algorithm to be more time-efficient by a factor of more than $|\states|$ (the number of states), but without the space-complexity improvement;
\item On the same PDAs that Lang considers, which we call \defn{\simple}, our sped-up algorithm is more efficient by a factor of $|\stackalphabet|$ in space and  $|\states| \cdot |\stackalphabet|$ in time.
\end{compactitem}
Compared with the pipeline of standard procedures for converting a top-down PDA to a CFG, converting to Chomsky normal form, and parsing with CKY, our top-down algorithm is faster by a factor of more than $O(|\states|^3)$.

Finally, we present iterative algorithms for computing the total weight of all runs of a WPDA\@.




\section{Weighted Pushdown Automata}\label{sec:definitions}

\subsection{Preliminaries}

Let $\range{i}{j}$ denote the sequence of integers $(i, \ldots, j)$.
If $\str$ is a string, we write $|\str|$ for the length of $\str$, 
$\str_i$ for the $i^{\text{th}}$ symbol of $\str$, and $\str\slice{i}{j}$ for the substring $\str_{i+1} \cdots \str_{j}$.

\begin{defin}
A \defn{monoid} is a tuple $(\semiringset, \odot, \id)$, where $\semiringset$ is a set, $\odot$ is an associative binary operation, and $\id \in \semiringset$, called the \defn{identity} element, satisfies $\id \odot a = a \odot \id = a$ for all $a \in \semiringset$. If $a \odot b = b \odot a$ for all $a,b$, we say that the monoid is \defn{commutative}. 
\end{defin}

\begin{defin}
A \defn{semiring} is a tuple $\semiring = \semiringtuple$ 
such that $\left(\semiringset, \oplus, \zero \right)$ is a commutative monoid and $\left(\semiringset, \otimes, \one \right)$ is a monoid. Additionally, $\otimes$ distributes over $\oplus$, that is, $a \otimes (b \oplus c) = a \otimes b \oplus a \otimes c$ and $(a \oplus b) \otimes c = a \otimes c \oplus b \otimes c$, and $\zero$ is absorbing with respect to $\otimes$, that is, $\zero \otimes a = a \otimes \zero = \zero$. If $\otimes$ is commutative then we say that $\semiring$ is \defn{commutative}.
\end{defin}

We also sometimes assume $\semiring$ is \defn{continuous}; please see the survey by \citet{droste-2009-semirings} for a definition.

\subsection{Definition}

Our definition of weighted PDA is more general than usual definitions, in order to accommodate the top-down and bottom-up variants introduced in \cref{sec:tdbu}. It is roughly a weighted version of extended PDAs of \citet[p.~173]{aho+ullman:1972} and the PDAs of \citet[p.~131]{lewis+papadimitriou:1997}.

\begin{defin}
A \defn{weighted pushdown automaton} (WPDA) over a semiring $\semiring=\semiringtuple$ is a tuple $\pushdown = (\states, \alphabet, \stackalphabet, \trans, \pdaconfig{\initstack}{\startstate}, \pdaconfig{\finalstack}{\finalstate})$, where:
\begin{compactitem}
    \item $\states$ is a finite set of states;
    \item $\alphabet$ is a finite set of input symbols, called the input alphabet;
    \item $\stackalphabet$ is a finite set of stack symbols, called the stack alphabet;
    \item $\trans \colon \states \times \stackalphabet^* \times \left( \alphabet \cup  \set{\varepsilon} \right) \times \states \times \stackalphabet^* \rightarrow \semiringset$ is called the transition weighting function;
    \item $\pdaconfig{\initstack}{\startstate}$ is called the initial configuration and $\pdaconfig{\finalstack}{\finalstate}$ is called the final configuration, where $\startstate, \finalstate \in \states$ and $\initstack, \finalstack \in \stackalphabet^*$ .
\end{compactitem}
\end{defin}

Stacks are represented as strings over $\stackalphabet$, from bottom to top. Thus, in the stack $\stackseq = \stacksymbol{X_1} \stacksymbol{X_2} \cdots \stacksymbol{X_n}$, the symbol $\stacksymbol{X_1}$ is at the bottom of the stack, while $\stacksymbol{X_n}$ is at the top.

\begin{defin}
A \defn{configuration} of a WPDA is a pair
$\pdaconfig{\stackseq}{q}$, where $q \in \states$ is the current state and ${\stackseq} \in \stackalphabet^*$ is the current contents of the stack.
\end{defin}

The initial and final configurations of a WPDA are examples of configurations; it is possible to generalize the initial and final stacks to (say) regular expressions over $\stackalphabet$, but the above definition suffices for our purposes.

A WPDA moves from configuration to configuration by following transitions of the form $\pdaEdge{w}{q}{a}{r}{\stackseq_1}{\stackseq_2}$, which represents a move from the state $q$ to state $r$, while popping the sequence of symbols $\stackseq_1 \in \stackalphabet^*$ from the top of the stack and pushing the sequence $\stackseq_2 \in \stackalphabet^*$.

\begin{defin}
If $\trans(p,\stackseq_1,a,q,\stackseq_2) = w$, then we usually write $\transweight{p}{a}{q}{\stackseq_1}{\stackseq_2} = w$ or that $\delta$ has transition $(\pdaEdge{w}{q}{a}{p}{\stackseq_1}{\stackseq_2})$.
We sometimes let $\atrans$ stand for a transition, and we define $\trans(\atrans) = w$.
We say that $\atrans$ \defn{scans} $a$, and if $a \neq \varepsilon$, we call $\atrans$ \defn{scanning}; otherwise, we call it \defn{non-scanning}.
We say that $\atrans$ is $k$-pop, $l$-push if $|{\stackseq_1}|=k$ and $|{\stackseq_2}|=l$.
\end{defin}

\begin{defin}
If $\pdaconfig{\stackseq \stackseq_1}{q_1}$ and $\pdaconfig{\stackseq \stackseq_2}{q_2}$ are configurations, 
and $\atrans$ is a transition $\pdaEdge{w}{q_1}{a}{q_2}{\stackseq_1}{\stackseq_2}$,
we write $\pdaconfig{\stackseq \stackseq_1}{q_1} \Rightarrow_\atrans \pdaconfig{\stackseq \stackseq_2}{q_2}$.
\end{defin}

\begin{defin}
A \defn{run} of a WPDA $\pushdown$ is a sequence of configurations and transitions \[\arun=\pdaconfig{\stackseq_0}{q_0}, \atrans_1, \pdaconfig{\stackseq_1}{q_1}, \ldots, \atrans_n, \pdaconfig{\stackseq_n}{q_n}\] where, for $i = 1, \ldots, n$, we have $\pdaconfig{\stackseq_{i-1}}{q_{i-1}} \Rightarrow_{\atrans_i} \pdaconfig{\stackseq_i}{q_i}$.
(Sometimes it will be convenient to treat $\arun$ as a sequence of only configurations or only transitions.) 
A run 
is called \defn{accepting} if 
$\pdaconfig{\stackseq_0}{q_0}$
is the initial configuration and 
$\pdaconfig{\stackseq_n}{q_n}$
is the final configuration.
If, for $i=1, \ldots, n$, $\atrans_i$ scans $a_i$, then we say that $\arun$ scans the string $a_1 \cdots a_n$.
We write $\runs \left(\pushdown, \str \right)$ for the set of runs that scan $\str$ and $\runs \left( \pushdown \right)$ for the set of all accepting runs of $\pushdown$.
\end{defin}


\subsection{Subclasses of PDAs}
\label{sec:tdbu}

Next, we define two special forms for WPDAs, which we call \defn{top-down} and \defn{bottom-up}, because they can be used as top-down and bottom-up parsers for CFGs, respectively. The most common definition of PDA \citep{hopcroft-2006-introduction,autebert+:1997} corresponds to top-down PDAs,\footnote{This definition goes back to \citeposs{chomsky:1963} original definition, which also allows 0-pop transitions.} while the machine used in an LR parser \citep{knuth-1965-translation} corresponds to bottom-up PDAs.

\begin{defin}
A WPDA is called \defn{bottom-up} if it has only 1-push transitions. 
Moreover, the initial configuration is $\pdaconfig{\varepsilon}{\startstate}$ and the final configuration is $\pdaconfig{\stacksymbol{S}}{\finalstate}$ for some $\startstate, \finalstate \in \states$ and $\stacksymbol{S} \in \stackalphabet$.
\end{defin}

\begin{proposition} \label{thm:pda_to_bottomup}
Every WPDA is equivalent to some bottom-up WPDA\@.
\end{proposition}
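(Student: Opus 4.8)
The plan is to construct, from an arbitrary WPDA $\pushdown = (\states, \alphabet, \stackalphabet, \trans, \pdaconfig{\initstack}{\startstate}, \pdaconfig{\finalstack}{\finalstate})$, a bottom-up WPDA $\pushdown'$ that computes the same stringsum (i.e.\ assigns every string the same total weight of accepting runs), by simulating each transition of $\pushdown$ with a short gadget built only from $1$-push moves. The only feature that can prevent a WPDA from being bottom-up is a transition $\pdaEdge{w}{q}{a}{r}{\stackseq_1}{\stackseq_2}$ whose push string has length $l = |\stackseq_2| \neq 1$. The case $l \geq 1$ is routine: I would replace such a transition by a first $|\stackseq_1|$-pop, $1$-push move that pops $\stackseq_1$ and pushes only the first (bottommost) symbol of $\stackseq_2$, carrying weight $w$ and scanning $a$, followed by $l-1$ auxiliary $0$-pop, $1$-push moves that push the remaining symbols of $\stackseq_2$ one at a time through fresh intermediate states, each $\varepsilon$-scanning and weighted $\one$. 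The genuinely awkward case is $l=0$: a bottom-up machine pushes a symbol on every move, so at first glance it cannot leave the stack shorter or even unchanged.

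To resolve the $l=0$ case I would maintain a fresh bottom-of-stack marker $\bot \notin \stackalphabet$ as an invariant: whenever $\pushdown'$ occupies a state corresponding to a state $q$ of $\pushdown$, its stack equals $\bot\stackseq$ exactly when $\pushdown$ is in state $q$ with stack $\stackseq$. Because $\bot$ always sits beneath $\pushdown$'s contents, every symbol a pop must remove has something underneath it. A $0$-push transition $\pdaEdge{w}{q}{a}{r}{\stackseq_1}{\varepsilon}$ can then be simulated, for each $\stacksymbol{Z} \in \stackalphabet \cup \set{\bot}$, by the single $(|\stackseq_1|+1)$-pop, $1$-push move that pops $\stacksymbol{Z}\stackseq_1$ and pushes $\stacksymbol{Z}$ back, scanning $a$ with weight $w$; this deletes $\stackseq_1$ while restoring whatever lay beneath it. (When $\stackseq_1 = \varepsilon$ this specializes to the pop-and-replace move that realizes a pure state change.)

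The marker also repairs the boundary conditions demanded by the bottom-up definition. I would add a fresh start state $\startstate'$ and let $\pushdown'$ begin from $\pdaconfig{\varepsilon}{\startstate'}$, first pushing $\bot$ and then the symbols of $\initstack$ one at a time, reaching $\pdaconfig{\bot\initstack}{\startstate}$; symmetrically, from $\pdaconfig{\bot\finalstack}{\finalstate}$ a single move pops the whole stack $\bot\finalstack$ and pushes one fresh symbol $\stacksymbol{S}$, landing in the required final configuration $\pdaconfig{\stacksymbol{S}}{\finalstate'}$. Every move introduced is $1$-push, so $\pushdown'$ is bottom-up, and its initial and final configurations have the prescribed shapes $\pdaconfig{\varepsilon}{\startstate'}$ and $\pdaconfig{\stacksymbol{S}}{\finalstate'}$.

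For correctness I would first verify the stack invariant by induction on run length, and then exhibit a weight- and string-preserving bijection between $\runs(\pushdown, \str)$ and $\runs(\pushdown', \str)$: each accepting run of $\pushdown'$ factors uniquely into the fixed initialization segment, a sequence of gadgets, and the fixed finalization segment, and each gadget decodes to exactly one transition of $\pushdown$. Since every gadget places the original weight $w$ on a single move and $\one$ on its auxiliary moves, multiplying weights along a run of $\pushdown'$ reproduces the weight of the corresponding run of $\pushdown$, and summing over runs gives equal stringsums. I expect the main obstacle to be conceptual rather than computational: recognizing that the $l=0$ transitions, which cannot be simulated move-for-move under the $1$-push restriction, are rescued by the bottom marker together with the extra popped-and-replaced symbol, and then checking carefully that this device interacts correctly with the empty-stack boundary at the start and end of every run.
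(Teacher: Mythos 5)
Your construction is correct and follows essentially the same route as the paper's proof: a fresh bottom-of-stack marker (the paper's $\stacksymbol{S'}$, your $\bot$) guarantees that every $0$-push transition can be simulated by popping and re-pushing the symbol beneath it, quantified over all possible such symbols, while multi-push transitions are serialized into chains of $0$-pop, $1$-push moves through fresh states, and new initial/final states establish and discharge the marker. The only differences are cosmetic --- you keep the marker distinct from the final stack symbol, and you place the weighted $k$-pop move at the \emph{start} of the multi-push gadget rather than at the end, which is in fact the ordering that keeps $\stackseq_1$ accessible on top of the stack at the moment it is popped.
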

\begin{proof}
Add states $\startstate', \finalstate'$ and a stack symbol $\stacksymbol{S'}$, and make $\pdaconfig{\varepsilon}{\startstate'}$ and $\pdaconfig{\stacksymbol{S'}}{\finalstate'}$ the new initial and final configurations, respectively. Add transitions 
\begin{align*}
\pdaEdgeAlign{\one}{\startstate'}{\varepsilon}{\startstate}{\varepsilon}{\stacksymbol{S'}\stackseq_\init} \\ \pdaEdgeAlign{\one}{\finalstate}{\varepsilon}{\finalstate'}{\stacksymbol{S'}\stackseq_\final}{\stacksymbol{S'}}.
\end{align*}
For each $k$-pop, $l$-push transition $\pdaEdge{w}{p}{a}{r}{\stackseq}{X_1 \cdots X_l}$ where $l>1$, create $(l-1)$ new states $q_1, \dots, q_{l-1}$ and replace the transition with
\begin{align*}
\pdaEdgeAlign{\one}{p}{\varepsilon}{q_1}{\varepsilon}{X_1} \\
\pdaEdgeAlign{\one}{q_{i-1}}{\varepsilon}{q_i}{\varepsilon}{X_i} \quad i = 2, \ldots, l-1 \\
\pdaEdgeAlign{w}{q_{k-1}}{a}{r}{\stackseq}{X_k}.
\end{align*}
For each $k$-pop, $0$-push transition $\pdaEdge{w}{q}{a}{p}{\stackseq}{\varepsilon}$, replace it with the $(k+1)$-pop, $1$-push transitions 
$\pdaEdge{w}{q}{a}{p}{\stacksymbol{X} \stackseq}{\stacksymbol{X}}$
for every $\stacksymbol{X} \in \stackalphabet \cup \{\stacksymbol{S'}\}$.
\end{proof}

If the original WPDA had transitions that push at most $l$ symbols, the resulting WPDA has $O(l \cdot |\delta| \cdot |\states|)$ states and $O((l + |\stackalphabet|) \cdot |\delta|)$ transitions.

\begin{defin}
A WPDA is called \defn{top-down} if it has only 1-pop transitions. Moreover, the initial configuration is $\pdaconfig{\stacksymbol{S}}{\startstate}$ and the final configuration is $\pdaconfig{\varepsilon}{\finalstate}$
for some $\startstate, \finalstate \in \states$ and $\stacksymbol{S} \in \stackalphabet$.
\end{defin}

\begin{proposition} \label{thm:pda_to_topdown}
Every WPDA is equivalent to some top-down WPDA\@.
\end{proposition}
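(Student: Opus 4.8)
The plan is to mirror the construction used for \cref{thm:pda_to_bottomup}. That proof makes every transition $1$-push, starts from the empty stack, and accepts with a single stack symbol; the top-down conditions are the exact pop/push dual, namely every transition is $1$-pop, the run starts from the single-symbol configuration $\pdaconfig{\stacksymbol{S}}{\startstate}$, and it accepts on the empty stack $\pdaconfig{\varepsilon}{\finalstate}$. I would therefore reuse the same three-part recipe with the roles of pushing and popping interchanged, and exhibit a weight- and run-preserving equivalence between the original WPDA and the constructed one.

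First I would normalize the initial and final configurations. I would add two fresh states $\startstate', \finalstate'$ and a fresh sentinel stack symbol $\stacksymbol{S'}$, declare the new initial configuration to be $\pdaconfig{\stacksymbol{S'}}{\startstate'}$ and the new final configuration to be $\pdaconfig{\varepsilon}{\finalstate'}$, and add a start transition $\pdaEdge{\one}{\startstate'}{\varepsilon}{\startstate}{\stacksymbol{S'}}{\stacksymbol{S'}\stackseq_\init}$ together with an end transition $\pdaEdge{\one}{\finalstate}{\varepsilon}{\finalstate'}{\stacksymbol{S'}\stackseq_\final}{\varepsilon}$. The role of $\stacksymbol{S'}$ is that it sits at the bottom of the stack throughout the entire simulated run and is consumed only by the end transition, so the stack is guaranteed to be non-empty at every intermediate step.

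Next I would eliminate every transition that is not $1$-pop. For a $k$-pop transition with $k>1$, I would introduce $k-1$ fresh states and split it into a chain of $k$ one-pop transitions that pop the top symbols one at a time, carrying the original weight and scanned symbol on the last link and $\one$ on the others; the intermediate links are $0$-push, which the top-down form permits. For a $0$-pop transition $\pdaEdge{w}{q}{a}{p}{\varepsilon}{\stackseq}$, I would replace it, for every $\stacksymbol{X} \in \stackalphabet \cup \set{\stacksymbol{S'}}$, by the $1$-pop transition $\pdaEdge{w}{q}{a}{p}{\stacksymbol{X}}{\stacksymbol{X}\stackseq}$, which pops one symbol and immediately pushes it back beneath the intended new contents. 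I would apply the $k$-pop splitting to the (possibly multi-pop) end transition introduced above as well; since these two case rewrites together cover $k>1$, $k=0$, and leave $k=1$ untouched, the result has only $1$-pop transitions.

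The step I expect to be the crux is the $0$-pop elimination, whose correctness hinges entirely on the sentinel. A $0$-pop transition can in principle fire on an empty stack, and no $1$-pop transition can simulate that; the presence of $\stacksymbol{S'}$ at the bottom is exactly what guarantees a symbol is always available to pop and re-push, so I would verify that $\stacksymbol{S'}$ is re-pushed beneath whenever it is touched and is never exposed and consumed before the end transition. To finish, I would establish a weight-preserving bijection between accepting runs of the two machines by collapsing each replacement chain back to the transition it came from: because each chain contributes the original weight once and $\one$ elsewhere, the product of transition weights is preserved, so the two WPDAs assign the same weight to every string and are therefore equivalent. I would also record the resulting size bound, which mirrors the one stated after \cref{thm:pda_to_bottomup} with the maximal pop length playing the role of the maximal push length.
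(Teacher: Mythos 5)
Your proposal is correct and is exactly the construction the paper intends: its proof of \cref{thm:pda_to_topdown} is literally ``similar to the bottom-up case,'' and your dualization (sentinel $\stacksymbol{S'}$ with new initial configuration $\pdaconfig{\stacksymbol{S'}}{\startstate'}$ and final configuration $\pdaconfig{\varepsilon}{\finalstate'}$, chains of $1$-pop transitions for $k$-pop transitions with $k>1$, and nondeterministic guessing of the top symbol over $\stackalphabet \cup \set{\stacksymbol{S'}}$ for $0$-pop transitions) matches it, including the observation that the $0$-pop case is where nondeterminism is essential. You also correctly identify the sentinel's role in keeping the stack non-empty and the need to split the multi-pop end transition, so nothing is missing.
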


\begin{proof}
Similar to the bottom-up case.
\end{proof}
This conversion crucially makes use of nondeterminism to guess the top $k$ stack symbols.
\citet[p.~174]{aho+ullman:1972} give a different algorithm that uses the state to keep track of the top $k$ stack symbols. Although this does not require nondeterminism, it creates $O(|\stackalphabet|^k \cdot |\states|)$ states.

Finally, \citet{lang-1974-deterministic} considers a still more restricted subclass of PDAs.\footnote{This definition is also used by \citet{sipser:2012} and seems to go back to \citeposs{evey:1963} original definition of PDAs, which doesn't allow 1-pop, 1-push transitions. The term ``simple PDA'' has been used at least twice for two different purposes \citep{schutzenberger:1963,lewis+papadimitriou:1997}; we apologize for introducing a third.}
\begin{defin}
A WPDA is called \defn{\simple} if it only has $k$-pop, $l$-push transitions for $k \leq 1$ and $l \leq 1$.
\end{defin}
Because simple PDAs do not condition pushes on the top stack symbol, they can be weighted, but not probabilistic.

\subsection{Stringsums and \Runsum{}s} 

\begin{defin}
The \defn{weight} $\normalizer (\arun)$ of a run $\arun \in \runs \left( \pushdown \right)$ is the product of the weights of its transitions,
\begin{equation*}
    \normalizer (\arun) \defeq \bigotimes_{\atrans \in \arun} \trans \Parens{\atrans}.
\end{equation*}
\end{defin}

\begin{defin}
The \defn{stringsum} $\normalizer (\pushdown, \str)$ of a string $\str$ for a WPDA $\pushdown$ is the total weight of all runs of $\pushdown$ that scan $\str$, \begin{equation*}
    \normalizer (\pushdown, \str) \defeq \bigoplus_{\mathclap{\arun\in \runs \left(\pushdown, \str \right)}}
    \normalizer \left( \arun \right).
\end{equation*}
\end{defin}

\begin{defin} \label{def:allsum}
The \defn{\runsum} $\normalizer (\pushdown)$ of a WPDA $\pushdown$ is  the total weight of all runs of $\pushdown$,
\begin{equation*}
    \normalizer ({\pushdown}) \defeq \bigoplus_{\mathclap{\arun\in \runs \left({\pushdown} \right)}} 
    \normalizer(\arun) 
    .
\end{equation*}
\end{defin}

\subsection{Push and Pop Computations}

\label{sec:push-computation}

Our algorithms for bottom-up WPDAs make heavy use of \defn{push computations}.
Intuitively, a push computation is a run that pushes exactly one symbol without touching the stack symbols below it.

\begin{defin}[Push computation] \label{def:push-computation}
Let $\pushdown$ be a bottom-up WPDA and $\str \in \alphabet^*$ an input string. 
A \defn{push computation} of type $\myitem{i}{p}{X}{j}{q}$, where $0 \le i \le j \le |\str|$, $p, q \in \states$, and $X \in \stackalphabet$, is a run $\arun = \pdaconfig{\stackseq_0}{q_0}, \ldots, \pdaconfig{\stackseq_m}{q_m}$ that scans $\str\slice{i}{j}$, where $\stackseq_m = \stackseq_0 X$, $q_0 = p$, $q_m = q$, and for all $l>0$, $|\stackseq_l| \geq |\stackseq_m|$.
\end{defin}

\begin{figure}
    \centering
    \begin{tikzpicture}[x=0.9cm,
    decoration = {snake, pre length=2pt,post length=4pt}]
    \small
        \draw[style={thick, fill=gray!10}] (0,2) rectangle (0.5,3);
        \draw[style={thick, fill=gray!10}] (2,2) rectangle (2.5,3);
        \draw[style={thick}] (2,3) rectangle (2.5,3.5);
        \draw[style={thick, fill=gray!10}] (5,2) rectangle (5.5,3);
        \draw[style={thick}] (5,3) rectangle (5.5,3.5);
        \draw[style={thick, fill=gray!10}] (5,3.5) rectangle (5.5,4.25);
        \draw[style={thick}] (5,4.25) rectangle (5.5,4.75);
        \draw[style={thick, fill=gray!10}] (7,2) rectangle (7.5,3);
        \draw[style={thick}] (7,3) rectangle (7.5,3.5);
        
        \node[draw=none] (p1) at (0.25, 2.5) {$\stackseq$};
        \node[draw=none] (p2) at (2.25, 2.5) {$\stackseq$};
        \node[draw=none] (p3) at (5.25, 2.5) {$\stackseq$};
        \node[draw=none] (p4) at (7.25, 2.5) {$\stackseq$};
        
        \node[draw=none] (p5) at (2.25, 3.25) {$Y_1$};
        \node[draw=none] (p6) at (5.25, 3.25) {$Y_1$};
        \node[draw=none] (p7) at (5.25, 4.5) {$Y_m$};
        \node[draw=none] (p8) at (7.25, 3.25) {$X$};
        \node[draw=none] (p9) at (5.25, 4) {$\vdots$};

        \node[draw=none] (p13) at (3.75, 2.5) {$\cdots$};
        
        \coordinate (c1) at (5.25, 5);
        \coordinate (c2) at (7.25, 3.75);
        
        \begin{scope}[yshift=1.25cm]
        \node[state] (q0) at (0.25,0) { $q_0$ };
        \node[state] (q1) at (2.25,0) { $q_1$ };
        \node[state] (qk) at (5.25,0) { $q_{m-1}$ };
        \node[state] (qk1) at (7.25,0) { $q_{m}$ };
        \end{scope}
        
        \begin{scope}[auto=right,outer sep=4pt]
        \draw (q0) edge[decorate] node {$\strut \str\slice{i_0}{i_1}$} (q1);
        \draw (q1) edge[decorate] node {$\strut \str\slice{i_1}{i_{m-1}}$} (qk);
        \draw (qk) edge node {$\strut a$} (qk1);
        \end{scope}
        
        \draw[->] (c1) to[bend left, out=90, in=120] node[draw=none,pos=0.2,above] {$\pdaEdge{w}{q_{m-1}}{a}{q_{m}}{Y_1 \cdots Y_m}{X}$} (c2);
    \end{tikzpicture}
\caption{
A push computation is a sequence of transitions that pushes exactly one symbol ($\stacksymbol{X}$) without touching the stack symbols below ($\stackseq$).
The curly edges indicate sequences of transitions (which are themselves push computations) while the straight edge indicates a single transition.}
\label{fig:push-computation}
\end{figure}

\Cref{fig:push-computation} shows an example of a push computation. Notice that this \emph{push} of $X$ might be the result of possibly many transitions that can manipulate the stack. Every symbol other than $X$ that is pushed onto the stack during this computation must be popped later by another transition.

The mirror image of a push computation is a \defn{pop computation}, used in algorithms for top-down WPDAs; we defer its definition to \cref{sec:algo-top-down}.

\section{Normal Form}
\label{sec:normal-form}

In this section we present a series of semantics-preserving transformations for converting an arbitrary pushdown automaton into a normal form that is analogous to Chomsky normal form for context-free grammars. This will help us obtain a fast algorithm for computing stringsums.

\begin{defin}
A bottom-up WPDA is in \defn{normal form} if all of its scanning transitions are $k$-pop, $1$-push for $k \le 2$, and all of its non-scanning transitions are $2$-pop, $1$-push.
Similarly, a top-down WPDA is in \defn{normal form} if all of its scanning transitions are $1$-pop, $k$-push for $k \le 2$, and all of its non-scanning transitions are $1$-pop, $2$-push.
\end{defin}

\subsection{Binarization}


Recall that top-down and bottom-up WPDAs have $1$-pop, $k$-push transitions and $k$-pop, $1$-push transitions, respectively. Since the runtime of our stringsum algorithm depends highly on $k$, we convert the WPDA into an equivalent one with $k \leq 2$. 
We call this procedure \defn{binarization} because it is entirely analogous to binarization in CFGs. It is symmetric for top-down and bottom-up WPDAs.

\begin{proposition} \label{thm:bottomup_nf}
Every bottom-up WPDA is equivalent to a bottom-up WPDA whose transitions are $k$-pop, $1$-push where $k \le 2$.
\end{proposition}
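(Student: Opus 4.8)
The plan is to adapt context-free grammar binarization to the stack. Since a bottom-up WPDA already has only $1$-push transitions, every transition is $k$-pop, $1$-push for some $k$, and only those with $k \ge 3$ violate the claim; I would leave every transition with $k \le 2$ untouched. The entire task thus reduces to simulating a single $k$-pop, $1$-push transition with $k \ge 3$ by a chain of $2$-pop, $1$-push transitions that peel the symbols to be popped off the top of the stack two at a time.

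Concretely, for a transition $\atrans = \pdaEdge{w}{p}{a}{r}{X_1 \cdots X_k}{Y}$ (recall that $X_k$ sits on top), I would introduce fresh stack symbols $C^\atrans_2, \ldots, C^\atrans_{k-1}$ and fresh states $p^\atrans_{k-1}, \ldots, p^\atrans_2$, all private to $\atrans$, where $C^\atrans_j$ encodes the popped segment $X_j \cdots X_k$. I would then replace $\atrans$ by the chain
\begin{align*}
\pdaEdgeAlign{\one}{p}{\varepsilon}{p^\atrans_{k-1}}{X_{k-1} X_k}{C^\atrans_{k-1}} \\
\pdaEdgeAlign{\one}{p^\atrans_{j+1}}{\varepsilon}{p^\atrans_{j}}{X_{j} C^\atrans_{j+1}}{C^\atrans_{j}} \quad j = k-2, \ldots, 2 \\
\pdaEdgeAlign{w}{p^\atrans_{2}}{a}{r}{X_1 C^\atrans_{2}}{Y}.
\end{align*}
Each link pops the top two symbols and pushes one, so all new transitions are $2$-pop, $1$-push; the scanned symbol $a$ and the weight $w$ are carried entirely by the last link, and all others are non-scanning with weight $\one$. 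Run to completion from state $p$, the chain pops $X_1 \cdots X_k$, pushes $Y$, moves to $r$, and scans $a$ with weight $w$, exactly as $\atrans$ does.

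Correctness I would establish through a weight-preserving bijection between accepting runs of the two automata, obtained by expanding each application of a high-arity transition $\atrans$ into its chain and, conversely, contracting each complete chain. The crux is an atomicity argument: once the first link of $\atrans$'s chain fires, the machine enters a fresh state $p^\atrans_j$ whose only outgoing transition is the next link, and the composite symbols $C^\atrans_j$ appear in no other transition, so no foreign computation can be interleaved. Because the first link pushes $C^\atrans_{k-1}$ directly onto $X_{k-2}$ (and each later link likewise keeps the next pair contiguous on top), each pop is enabled precisely when the original symbols $X_1 \cdots X_k$ were present. I would close by noting that a chain begun on a stack not actually topped by $X_1 \cdots X_k$ gets stuck holding a composite symbol it can never consume, hence never reaches the final configuration $\pdaconfig{\stacksymbol{S}}{\finalstate}$ and contributes to no stringsum.

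The main obstacle is exactly this non-interference bookkeeping: I must rule out spurious runs that begin a chain, push further material above a composite symbol, and resume later, as well as runs that try to share a composite symbol across two applications of $\atrans$. Both are excluded by making the intermediate states and composite symbols private to a single transition, which is the reason for forcing the chain through fresh states rather than looping back at $p$. Verifying that this privacy makes the bijection exhaustive — that every accepting run of the new automaton decomposes uniquely into original transitions and complete chains — is the step that needs the most care.
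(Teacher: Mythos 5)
Your construction is correct and takes essentially the same approach as the paper: both replace each $k$-pop, $1$-push transition with a chain of $k-1$ two-pop, one-push transitions routed through $k-2$ fresh states private to that transition, so that the chain consumes the popped symbols from the top of the stack one link at a time and atomicity follows from the privacy of the intermediate states. The only (harmless) differences are that the paper pushes back one of the original popped symbols at each intermediate step instead of a fresh composite symbol $C^{\atrans}_j$, relying on the fresh states alone to prevent interference, and that it attaches the weight $w$ and the scanned symbol $a$ to the first link of the chain rather than the last.
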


\begin{proof}
For each $k$-pop, $1$-push transition $\pdaEdge{w}{p}{a}{q}{\stacksymbol{Y_1} \cdots \stacksymbol{Y_k}}{\stacksymbol{X}}$ such that $k > 2$ we introduce $k-2$ new states $r_1, \cdots, r_{k-2}$ and we replace the original transition with the following: 
\begin{align*}
    \pdaEdgeAlign{w}{p}{a}{r_1}{\stacksymbol{Y_1} \stacksymbol{Y_2}}{\stacksymbol{Y_2}} \\
    \pdaEdgeAlign{\one}{r_{i-1}}{\varepsilon}{r_i}{\stacksymbol{Y_i} \stacksymbol{Y_{i+1}}}{\stacksymbol{Y_{i+1}}} \quad  i \in \range{2}{k-2} \\
    \pdaEdgeAlign{\one}{r_{k-2}}{\varepsilon}{q}{\stacksymbol{Y_{k-1}} \stacksymbol{Y_k}}{\stacksymbol{X}}. \tag*{\qedhere}
\end{align*}
\end{proof}

If the original WPDA had transitions that pop at most $k$ symbols, the resulting WPDA has $O(k \cdot |\delta| \cdot |\states|)$ states and $O(k \cdot |\delta|)$ transitions.

\begin{proposition} \label{thm:topdown_nf}
Every top-down WPDA is equivalent to a top-down WPDA whose transitions are $1$-pop, $k$-push where $k \le 2$.
\end{proposition}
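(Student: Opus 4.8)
The plan is to prove \cref{thm:topdown_nf} as the exact mirror image of the bottom-up binarization in \cref{thm:bottomup_nf}, exploiting the push/pop duality between bottom-up and top-down WPDAs: there we split a $k$-pop, $1$-push transition into a chain of $2$-pop, $1$-push transitions, so here I would split each $1$-pop, $k$-push transition into a chain of $1$-pop, $2$-push transitions. Concretely, for every transition $\pdaEdge{w}{p}{a}{q}{\stacksymbol{X}}{\stacksymbol{Y_1} \cdots \stacksymbol{Y_k}}$ with $k > 2$, I would introduce $k-2$ fresh states $r_1, \ldots, r_{k-2}$ and build the pushed string incrementally, each new transition popping the current top symbol $\stacksymbol{Y_i}$ and pushing $\stacksymbol{Y_i}\stacksymbol{Y_{i+1}}$ back, thereby growing the stack by one symbol per step.

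Explicitly, I would replace the original transition with
\begin{align*}
    \pdaEdgeAlign{w}{p}{a}{r_1}{\stacksymbol{X}}{\stacksymbol{Y_1} \stacksymbol{Y_2}} \\
    \pdaEdgeAlign{\one}{r_{i-1}}{\varepsilon}{r_i}{\stacksymbol{Y_i}}{\stacksymbol{Y_i} \stacksymbol{Y_{i+1}}} \quad i \in \range{2}{k-2} \\
    \pdaEdgeAlign{\one}{r_{k-2}}{\varepsilon}{q}{\stacksymbol{Y_{k-1}}}{\stacksymbol{Y_{k-1}} \stacksymbol{Y_k}}.
\end{align*}
Every one of these $k-1$ transitions is $1$-pop, $2$-push, so the resulting WPDA meets the required bound. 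The intended invariant, which I would verify by a short induction on the chain, is that firing the first $j$ of these transitions carries a configuration $\pdaconfig{\stackseq \stacksymbol{X}}{p}$ to $\pdaconfig{\stackseq \stacksymbol{Y_1} \cdots \stacksymbol{Y_{j+1}}}{r_j}$; in particular the full chain sends $\pdaconfig{\stackseq \stacksymbol{X}}{p}$ to $\pdaconfig{\stackseq \stacksymbol{Y_1} \cdots \stacksymbol{Y_k}}{q}$, scans $a$ exactly once (only the first transition is scanning), and accumulates weight $w \otimes \one \otimes \cdots \otimes \one = w$, matching the original transition.

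The main obstacle, as in the bottom-up case, is not the local stack bookkeeping but establishing a weight-preserving bijection between the accepting runs (and hence the stringsums) of the original and transformed WPDAs. The key point I would argue carefully is that the states $r_1, \ldots, r_{k-2}$ are fresh and appear in no other transition, so a run can enter $r_1$ only via the first new transition and can leave the $r$-states only by completing the chain to $q$; consequently the chain is forced to fire atomically and in order, introducing no spurious runs and removing none. Replacing, in any run, each use of the original transition by the corresponding length-$(k-1)$ block gives the desired bijection, and the weight computation above shows it preserves run weights. I would then note, paralleling the remark following \cref{thm:bottomup_nf}, that if the original top-down WPDA pushes at most $k$ symbols, the binarized one has $O(k \cdot |\trans| \cdot |\states|)$ states and $O(k \cdot |\trans|)$ transitions.
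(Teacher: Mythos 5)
Your construction is exactly the paper's intended proof: the paper states only ``similar to the bottom-up case,'' and the mirrored chain of $1$-pop, $2$-push transitions you give (first transition carrying the weight $w$ and the scanned symbol, the rest weight-$\one$ and non-scanning, with fresh intermediate states forcing the chain to fire atomically) coincides transition-for-transition with the construction the authors had in mind. The stack bookkeeping and the weight/run bijection argument are both correct, so nothing further is needed.
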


\subsection{Nullary Removal}

In this section, we discuss the removal of \defn{nullary} transitions from WPDAs:
\begin{defin}
In a bottom-up WPDA, a transition is called \defn{nullary} if it is of the form $\pdaEdge{w}{p}{\varepsilon}{q}{\varepsilon}{\stacksymbol{X}}$.
\end{defin}
Although nullary transitions are analogous to nullary productions in a CFG, the standard procedure for removing nullary productions from CFGs does not have an exact analogue for PDAs, and the procedure we describe here is novel.

We assume a bottom-up WPDA, but an identical construction exists for top-down WPDAs.
We also assume that the WPDA has been binarized, and semiring $\semiring$ is commutative and continuous.

The construction consists of three steps: partitioning, precomputation, and removal.

\paragraph{Partitioning.}
For every symbol $X \in \stackalphabet$, we replace $X$ with two stack symbols $X^\yesnull$ and $X^\nonull$.
A push computation that pushes a $X^\yesnull$ scans $\varepsilon$,
and a push computation that pushes a $X^\nonull$ scans a string that is not $\varepsilon$. 
To do this, we replace every $k$-pop transition $\pdaEdge{w}{p}{a}{q}{X_1 \cdots X_k}{Y}$ with $2^k$ new transitions
$\pdaEdge{w}{p}{a}{q}{X_1^{\anull_1} \cdots X_k^{\anull_k}}{Y^\anull}$, where $\anull = \yesnull$ iff $\anull_i = \yesnull$ for all $i$ and $a=\varepsilon$.
For instance, we replace transition $\pdaEdge{w}{p}{\varepsilon}{q}{XY}{Z}$ with
the following $2^2 = 4$ transitions
\begin{gather*}
\pdaEdge{w}{p}{\varepsilon}{q}{X^\yesnull Y^\yesnull}{Z^\yesnull} \qquad
\pdaEdge{w}{p}{\varepsilon}{q}{X^\nonull Y^\yesnull}{Z^\nonull} \\
\pdaEdge{w}{p}{\varepsilon}{q}{X^\yesnull Y^\nonull}{Z^\nonull} \qquad
\pdaEdge{w}{p}{\varepsilon}{q}{X^\nonull Y^\nonull}{Z^\nonull}.
\end{gather*}

\paragraph{Precomputation.}
We compute the weight of all non-scanning push computations by solving the quadratic system of equations:
\begin{align*}
\nulltable_{pXq} &= \transweight{p}{\varepsilon}{q}{\varepsilon}{X} \\ &\quad\oplus \bigoplus_{\mathclap{Y,r}} \nulltable_{pYr} \otimes \transweight{r}{\varepsilon}{q}{Y}{X} \\ &\quad\oplus \bigoplus_{\mathclap{Y,Z,s}} \nulltable_{pYZs}
\otimes \transweight{s}{\varepsilon}{q}{YZ}{X} \\
\nulltable_{pYZs} &= \bigoplus_r \nulltable_{pYr} \otimes \nulltable_{rZs}.
\end{align*}
See \cref{sec:runsums} for details on solving such systems of equations, which assumes that $\semiring$ is continuous.
Then $\nulltable_{pXq}$ is the total weight of all push computations of type $\myitem{i}{p}{X}{q}{i}$ for any $i$.

\paragraph{Removal.}

First, delete every transition that pushes $\stacksymbol{X}^\yesnull$ for each $\stacksymbol{X} \in \stackalphabet$. If the PDA accepts~$\varepsilon$ with weight $w$,
add $\pdaEdge{w}{\startstate}{\varepsilon}{\finalstate}{\varepsilon}{S^\yesnull}$ as the sole nullary transition. (For correctness, we must also ensure that no transition pops $\stacksymbol{S}^\yesnull$, no transition enters $\startstate$, and no transition leaves $\finalstate$.)

Sometimes an $\stacksymbol{X}^\yesnull$ is popped immediately after it is pushed (that is, with no input symbols scanned between the push and the pop). To handle these cases, for the following transitions, we create new versions in which popped $\stacksymbol{X}^\yesnull$ symbols are removed, and their corresponding weight multiplied in.
\begin{equation*}
\begin{array}{@{}ll@{}}
\text{For each:} & \text{Replace with ($\forall t \in \states$):} \\
\pdaEdge{w}{p}{a}{q}{\stacksymbol{Y}^\yesnull}{\stacksymbol{X}^\nonull} & \pdaEdge{ \nulltable_{t\stacksymbol{Y}p} \otimes w}{t}{a}{q}{\varepsilon}{X^\nonull} \\
\pdaEdge{w}{p}{a}{q}{\stacksymbol{Y}^\yesnull \stacksymbol{Z}^\yesnull}{\stacksymbol{X}^\nonull} & \pdaEdge{\nulltable_{t\stacksymbol{YZ}p} \otimes  w}{t}{a}{q}{\varepsilon}{\stacksymbol{X}^\nonull} \\
\pdaEdge{w}{p}{a}{q}{\stacksymbol{Y}^\nonull \stacksymbol{Z}^\yesnull}{\stacksymbol{X}^\nonull} & \pdaEdge{\nulltable_{t\stacksymbol{Z}p} \otimes w}{t}{a}{q}{\stacksymbol{Y}^\nonull}{\stacksymbol{X}^\nonull}
\end{array}
\end{equation*}
(Note that $a \in \alphabet \cup \{\varepsilon\}$, but the partitioning step only allows $a = \varepsilon$ for the third type above.)

\newcommand{\manynull}[2]{\prescript{}{#1}N_{#2}}
\newcommand{\manynullfused}[3]{\prescript{}{#1#2}#3}

However, we have missed one type of transition, those of the form $\pdaEdge{w}{p}{a}{q}{\stacksymbol{Y}^\yesnull \stacksymbol{Z}^\nonull}{\stacksymbol{X}^\nonull}$.
Create new stack symbols $\manynullfused{r}{s}{{\stacksymbol{Z}}}$ for all $r, s \in \states$ and $\stacksymbol{Z} \in \stackalphabet$. This stands for a sequence of zero or more non-scanning push computations that goes from state $r$ to~$s$, followed by a push computation that pushes~$Z$.
The transition that pushes $Z$ must be a 0-pop transition, because all other transitions expect a symbol of the form $\stacksymbol{X}^\nonull$ on the top of the stack. So we modify (again) the 0-pop transitions to first simulate zero or more nullary transitions:
\begin{equation*}
\begin{array}{@{}ll@{}}
\text{For each:} & \text{Replace with ($\forall s \in \states$):} \\
\pdaEdge{\nulltable_{tYp} \otimes w}{t}{a}{q}{\varepsilon}{X^\nonull} & \pdaEdge{\nulltable_{tYp} \otimes w}{s}{a}{q}{\varepsilon}{\manynullfused{s}{t}{X}} \\
\pdaEdge{\nulltable_{tYZp} \otimes w}{t}{a}{q}{\varepsilon}{X^\nonull} 
& \pdaEdge{\nulltable_{tYZp} \otimes w}{s}{a}{q}{\varepsilon}{\manynullfused{s}{t}{X}}
\end{array}
\end{equation*}

And for each transition of the form $\pdaEdge{w}{p}{a}{q}{Y^\yesnull Z^\nonull}{X^\nonull}$ (where $a \in \alphabet \cup \{\varepsilon\}$), we create transitions for all $r,s,t \in \states$:
\begin{align*}
\pdaEdgeAlign{\nulltable_{sYt} \otimes w}{p}{a}{q}{\manynullfused{r}{t}{Z}}{\manynullfused{r}{s}{X}}. \\
\intertext{(This step is where commutativity is needed.) Finally, add transitions to remove the state annotations, for all $p, X, q$:}
\pdaEdgeAlign{\one}{q}{\varepsilon}{q}{\manynullfused{p}{p}{X}}{X^\nonull}. 
\end{align*}


\subsection{Unary Removal}

The final step in conversion to normal form is removal of \defn{unary} transitions, so called by analogy with unary productions in a CFG\@. 
\begin{defin}
A transition is called \defn{unary} if it is of the form $\pdaEdge{w}{p}{\varepsilon}{q}{\stacksymbol{Y}}{\stacksymbol{X}}$.
\end{defin}

We assume that $\semiring$ is equipped with a star operation satisfying $a^* = \one \oplus a \otimes a^* = \one \oplus a^* \otimes a$. If $\semiring$ is continuous, then $a^* = \bigoplus_{i=0}^\infty a^i$.

Unary transitions can form cycles that can be traversed an unbounded number of times, which is problematic for a dynamic programming algorithm. Therefore, we precompute the weights of all runs of unary transitions. 
Define the matrix $\unarytable \in \semiring^{(\states \times \stackalphabet) \times (\states \times \stackalphabet)}$:
\begin{align*}
\unarytable_{p\stacksymbol{Y},q\stacksymbol{X}} = \transweight{p}{\varepsilon}{q}{\stacksymbol{Y}}{\stacksymbol{X}}
\end{align*}
and form its transitive closure $\unarytable^*$ \citep{Lehmann1977}. Then $\unarytable^*_{pY,qX}$ is the total weight of all runs of unary transitions from configuration $\pdaconfig{Y}{p}$ to $\pdaconfig{X}{q}$.

Then we remove all unary transitions and modify every non-unary transition as follows:
\begin{equation*}
\begin{array}{@{}ll@{}}
\text{For each non-unary:} & \text{Replace with:} \\
\pdaEdge{w}{p}{a}{q}{\stackseq}{X} & \pdaEdge{w \otimes \unarytable^*_{qX,rY}}{p}{a}{r}{\stackseq}{Y}
\end{array}
\end{equation*}

We give details on the complexity of this transformation in \cref{sec:appendix-unary-removal}.

\section{Stringsums in Bottom-up WPDAs}

In this section, we give dynamic programming algorithms for computing the stringsum of an input string $\str$ (with $|\str| = n$) of bottom-up WPDAs in normal form. 
We give a basic version of the algorithm, which has the same runtime as Lang's algorithm but improved space requirements, and a fast version that has the same space complexity and runs asymptotically faster. On simple PDAs (for which Lang's algorithm was designed), the latter version has both improved space and time complexity.

\subsection{Basic Algorithm}

\begin{figure} \small
\vspace*{-2ex}
\begin{equation*}
\begin{array}{@{}cc@{}}
\multicolumn{2}{@{}l}{\text{Item form}} \\
\myitem{i}{p}{\stacksymbol{X}}{j}{q} & \begin{gathered} 0 \leq i < j \leq n \\ p, q \in \states; \stacksymbol{X} \in \stackalphabet \end{gathered} \\
\multicolumn{2}{@{}l}{\text{Inference rules}} \\
\dfrac{\myitem{i}{p}{\stacksymbol{Y}}{k}{r} \quad \myitem{k}{r}{\stacksymbol{Z}}{j\compactminus|a|}{s}}{\myitem{i}{p}{\stacksymbol{X}}{j}{q}} & \begin{gathered} \pdaEdge{w}{s}{a}{q}{\stacksymbol{Y} \stacksymbol{Z}}{\stacksymbol{X}} \\ \str\slice{j\compactminus|a|}{j} = a \end{gathered} \\[3ex]
\dfrac{\myitem{i}{p}{\stacksymbol{Y}}{j\compactminus1}{r}}{\myitem{i}{p}{\stacksymbol{X}}{j}{q}} & \pdaEdge{w}{r}{\str_j}{q}{\stacksymbol{Y}}{\stacksymbol{X}} \\
\dfrac{}{\myitem{i}{p}{\stacksymbol{X}}{j}{q}} & \begin{gathered} \pdaEdge{w}{p}{\str_j}{q}{\varepsilon}{\stacksymbol{X}} \\ j = i+1 \end{gathered} \\
\multicolumn{2}{@{}l}{\text{Goal}} \\
\myitem{0}{\startstate}{\stacksymbol{S}}{n}{\finalstate}
\end{array}
\end{equation*}
\caption{Deductive system for stringsums of bottom-up WPDAs in normal form.}
\label{algo:fast-bottom-up-parsing}
\end{figure}

The algorithm computes stringsums efficiently by exploiting the structural similarities among the WPDA runs. \Cref{algo:fast-bottom-up-parsing} shows a deductive system \cite{shieber-1995-principles, goodman-1999-semiring} for deriving items corresponding to push computations.

The items have the form $\myitem{i}{p}{X}{j}{q}$ for $p, q \in \states$, $\stacksymbol{X} \in \stackalphabet$, $0 \leq i \leq j \leq n$. If our algorithm derives this item with weight $w$, then the push computations of type $\myitem{i}{p}{X}{j}{q}$ have total weight $w$.

We distinguish three categories of push computations, based on their final transition, and we include an inference rule for each. First are those consisting of a single $0$-pop, $1$-push transition. The other two categories are those ending in a $1$-pop transition and a $2$-pop transition, respectively. These can be built recursively from shorter push computations. 

The goal item is $\myitem{0}{\startstate}{\stacksymbol{S}}{n}{\finalstate}$, which stands for all runs from the initial configuration to the final configuration that scan $\str$.

\Cref{algo:weighted-push-computations} shows how to compute item weights according to these rules. At termination, the weight of the goal item is the sum of the weights of all accepting runs that scan $\str$.

\begin{algorithm} \small
\caption{Compute the weights of all push computations of a bottom-up WPDA on an input string.}
\label{algo:weighted-push-computations}
\begin{algorithmic}[1] 
    \State $\weightstable \gets \zero$
    \State $n \gets |\str|$\label{line:single-push-computation}
    \For{$i \in \range{0}{n-1}$}
        \State $j \gets i + 1$
        \LineComment{$0$-pop, $1$-push}
        \For{$(\pdaEdge{w}{p}{\str_j}{q}{\varepsilon}{X}) \in \trans$}
            \State $\itemweight{i}{p}{X}{j}{q} \gets w$
            \label{line:0pop}
        \EndFor
    \EndFor
    \For{$\spanlength \in \range{2}{n}$}
        \For{$i \in \range{0}{n - \spanlength + 1}$}
            \State $j \gets i + \spanlength$
            \LineComment{$1$-pop, $1$-push}
            \For{$p \in \states$}
            \label{line:1pop-begin}
                \For{$(\pdaEdge{w}{r}{\str_j}{q}{Y}{X}) \in \trans$}
                \State $\itemweight{i}{p}{X}{j}{q} \opluseq \itemweight{i}{p}{Y}{j\compactminus1}{r} \otimes w$
                    \label{line:1pop-end}
                \EndFor
            \EndFor
            \LineComment{$2$-pop, $1$-push}
            \For{$p, r \in \states$}
            \label{line:2pop-begin}
                \For{$(\pdaEdge{w}{s}{a}{q}{\stacksymbol{Y}\stacksymbol{Z}}{\stacksymbol{X}}) \in \trans$ with $\str\slice{j\compactminus|a|}{j} = a$} 
                    \For{$k \in \range{i+1}{j-|a|-1}$}
                        \State $\begin{aligned}[t] &\itemweight{i}{p}{\stacksymbol{X}}{j}{q} \opluseq (\itemweight{i}{p}{\stacksymbol{Y}}{k}{r} \otimes {} \\ &\qquad \itemweight{k}{r}{\stacksymbol{Z}}{j-|a|}{s} \otimes w)
                        \end{aligned}
                        $
                    \label{line:2pop-end}
                    \EndFor
                \EndFor
            \EndFor
        \EndFor
    \EndFor
    \State \Return $\itemweight{0}{\startstate}{\stacksymbol{S}}{n}{\finalstate}$
\end{algorithmic}

\end{algorithm}

\subsection{Correctness}

\begin{theorem}
Let $\pushdown$ be a WPDA and $\str \in \alphabet^*$ an input string. The weight $\itemweight{i}{p}{X}{j}{q}$ is the total weight of all push computations of $\pushdown$ of type $\myitem{i}{p}{X}{j}{q}$.
\end{theorem}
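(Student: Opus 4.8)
The plan is to prove, by induction on the span $\ell = j - i$, that the table entry $\itemweight{i}{p}{X}{j}{q}$ computed by \cref{algo:weighted-push-computations} equals $\bigoplus_{\arun} \normalizer(\arun)$, where the sum ranges over all push computations $\arun$ of type $\myitem{i}{p}{X}{j}{q}$. The argument rests on a \emph{last-transition decomposition} lemma that sets up a weight-preserving bijection between such push computations and the instances of the three inference rules of \cref{algo:fast-bottom-up-parsing}. Once that bijection is in place, matching the table entries to the sum-over-derivations semantics of the deductive system is routine, since \cref{algo:weighted-push-computations} merely accumulates ($\opluseq$) the right-hand sides of these rules.

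For the decomposition, I would first note that because $\pushdown$ is in normal form every push computation has at least one transition, and its final transition $\atrans_m$ is $k$-pop, $1$-push with $k \in \{0, 1, 2\}$; each value of $k$ corresponds to one rule. A useful preliminary observation is that the very first transition must be $0$-pop: from the base stack $\stackseq_0$ of height $h$, a $1$-push transition popping $k \ge 1$ symbols would drop the height below $h+1 = |\stackseq_m|$, violating the defining inequality $|\stackseq_l| \ge |\stackseq_m|$; hence the height reaches $h+1$ immediately. When $\atrans_m$ is $0$-pop, the preceding configuration has stack $\stackseq_0$ of height $h < |\stackseq_m|$, so the inequality forces it to be the initial configuration and $m = 1$, giving the base rule with $j = i+1$. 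When $\atrans_m$ is $1$-pop, popping $Y$, deleting it leaves a push computation of type $\myitem{i}{p}{Y}{j\compactminus1}{r}$ (the scanned symbol being $\str_j$, since a $1$-pop transition is scanning in normal form), matching the $1$-pop rule.

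The crux, and the step I expect to be the main obstacle, is the $2$-pop case, in which $\atrans_m$ pops $Y Z$ (with $Z$ on top) and pushes $X$, so the penultimate stack is $\stackseq_0 Y Z$. Here I would locate the split point $k$ as the last index $l \le m-1$ at which the stack height equals $h+1$ (this set is nonempty by the preliminary observation). Since from index $k$ onward the height never returns to $h+1$, the symbol sitting at height $h+1$ is never touched after step $k$ and therefore equals the $Y$ popped at the end, so $\stackseq_k = \stackseq_0 Y$. Cutting $\arun$ at $k$ then yields a push computation of type $\myitem{i}{p}{Y}{k}{r}$ followed by one of type $\myitem{k}{r}{Z}{j\compactminus|a|}{s}$ resting on the extended base $\stackseq_0 Y$; one checks that both halves inherit the required height inequality and that this split is the unique one consistent with $\atrans_m$. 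Conversely, concatenating any two such push computations with a matching final transition produces a valid push computation, which closes the bijection.

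With the bijection established, the inductive step is mechanical: the algorithm sets $\itemweight{i}{p}{X}{j}{q}$ to the $\oplus$ over all three rules of the transition weight $\otimes$-multiplied with the antecedent entries, which by the induction hypothesis are the total weights of the sub-push-computations (each of strictly smaller span). Since $\normalizer(\arun) = \normalizer(\arun_1) \otimes \normalizer(\arun_2) \otimes \trans(\atrans_m)$ by definition of run weight, distributivity of $\otimes$ over $\oplus$ lets me push the products through the sums, and the bijection guarantees each push computation is counted exactly once; no commutativity is needed, as the decomposition preserves transition order. Finally, because \cref{algo:weighted-push-computations} iterates over spans in increasing order, every antecedent entry is fully accumulated before it is read, so the table values equal the sum-over-derivations weights, completing the induction and hence the proof.
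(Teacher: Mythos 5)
Your proof is correct and follows essentially the same route as the paper's: induction on the span length with a case analysis on the final ($0$-, $1$-, or $2$-pop) transition, a weight-preserving decomposition of each push computation into sub-push-computations followed by the last transition, and distributivity to convert the resulting bijection into an equality of semiring sums. The only difference is one of detail: where the paper simply asserts that the $2$-pop decomposition is unique, you exhibit the split point explicitly as the last configuration of height $h+1$ and verify that the symbol sitting there must be the popped $Y$ --- a worthwhile elaboration, but not a different argument.
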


\begin{proof}
By induction on the span length, $\spanlength = j-i$.

\paragraph{Base Case.} 
Assume that $j-i=1$.
The only push computations from state $p$ to $q$ that push $\stacksymbol{X}$ and scan $\str\slice{i}{j}$ are ones that have the single transition
$\atrans = \pdaEdge{w}{p}{\str_j}{q}{\varepsilon}{\stacksymbol{X}}$.
There cannot exist others, because normal form requires that any additional non-scanning transitions would decrease the stack height.
So the total weight of all such push computations is $w$, and the algorithm correctly sets $\itemweight{i}{p}{\stacksymbol{X}}{j}{q} = w$ at line~\ref{line:0pop}.

\paragraph{Inductive Step.} Assume that the statement holds for any
spans of length at most $(\spanlength-1)$ and consider a span of length $\spanlength$. For such spans, the algorithm computes the total weight of all push computations $\arun$ of 
type $\myitem{i}{p}{X}{j}{q}$, for all $\stacksymbol{X} \in \stackalphabet$, $p, q \in \states$, and $j-i = \spanlength$.
%
%
This weight must be the sum of weights of three types of push computations: those that end with 0-pop transitions, with 1-pop transitions, and with 2-pop transitions.

But ending with a 0-pop transition is impossible, because such push computations must have only one transition and therefore $j-i \leq 1$.
The 1-pop and 2-pop parts of the sum are computed at lines \ref{line:1pop-begin}--\ref{line:1pop-end} and \ref{line:2pop-begin}--\ref{line:2pop-end} of the algorithm, respectively.

\subparagraph{Ending with 1-pop transition.} 
The algorithm sums over all possible ending transitions $\lasttrans = \pdaEdge{w}{r}{\str_j}{q}{Y}{X}$. (Normal form requires that this transition be scanning.)
Let $\runs$ be the set of all push computations of type $\myitem{i}{p}{X}{j}{q}$  ending in $\lasttrans$, and let $\runs'$ be the set of all push computations of type $\myitem{i}{p}{Y}{j-1}{r}$. 
Every push computation in $\runs$ must be of the form $\arun = \arun' \circ \lasttrans$, where $\arun' \in \runs'$, and conversely, for every $\arun' \in \runs'$, we have $\arun' \circ \lasttrans \in \runs$.
By the induction hypothesis, the total weight of $\runs'$ was computed in a previous iteration.
Then, by distributivity, we have:
\begin{align*}
    \bigoplus_{\arun \in \runs} \bigotimes_{\atrans \in \arun} \trans(\atrans) &= \bigoplus_{\arun' \in \runs'} \bigotimes_{\atrans \in \arun'} \trans \Parens{\atrans} \otimes \trans \Parens{\lasttrans}  \\
    &= \left( \bigoplus_{\arun' \in \runs'} \bigotimes_{\atrans \in \arun'} \trans \Parens{\atrans} \right) \otimes \trans \Parens{\lasttrans} \\
    &= \itemweight{i}{p}{\stacksymbol{Y}}{j-1}{r} \otimes \trans \Parens{\lasttrans}.
\end{align*}

\subparagraph{Ending with 2-pop transition.} The algorithm sums over all possible ending transitions $\lasttrans = \pdaEdge{w}{s}{a}{q}{\stacksymbol{YZ}}{\stacksymbol{X}}, a \in \set{\str_j, \varepsilon}$.
Every push computation $\arun$ that ends with $\lasttrans$ decomposes uniquely into $\arun' \circ \arun'' \circ \lasttrans$, where $\arun'$ and $\arun''$ are push computations of type $\myitem{i}{p}{Y}{k}{r}$ and $\myitem{k}{r}{Z}{j-|a|}{s}$, respectively, for some $k \in \range{i+1}{j-|a|-1}$ and $r \in \states$. We call $(k,r)$ the \defn{split point} of $\arun$.

The algorithm sums over all split points $(k,r)$.
Let $\runs$ be the set of all push computations of type $\myitem{i}{p}{X}{j}{q}$ ending in $\lasttrans$ with split point $(k,r)$, and let $\runs'$ and $\runs''$ be the sets of all push computations of type $\myitem{i}{p}{Y}{k}{r}$ and $\myitem{k}{r}{Z}{j-|a|}{s}$, respectively.
Every $\arun \in \runs$ must be of the form $\arun' \circ \arun'' \circ \lasttrans$,  where $\arun' \in \runs'$ and $\arun'' \in \runs''$, and conversely, for every $\arun' \in \runs'$ and $\arun'' \in \runs''$, $\arun' \circ \arun'' \circ \lasttrans \in \runs$. Because $i<k$, we must have $j-|a| - k \leq j-k < j-i$, and because $k<j-|a|$, we must have $k-i < j-|a|-i \leq j-i$.
By the induction hypothesis, the total weight of $\runs'$ and $\runs''$ were fully computed in a previous iteration.
As in the previous case, by distributivity we have
\begin{align*}
    &\bigoplus_{\arun \in \runs} \bigotimes_{\atrans \in \arun} \trans \Parens{\atrans} = \itemweight{i}{p}{\stacksymbol{Y}}{k}{r} \\
    &\quad \otimes \itemweight{k}{r}{\stacksymbol{Z}}{j-|a|}{s} \otimes \trans \Parens{\lasttrans}. \tag*{\qedhere}
\end{align*}

\end{proof}

\subsection{Stack Automaton}

The distribution over possible configurations that~$\pushdown$ can be in after reading $\str\slice{0}{m}$ can be generated by a weighted finite-state automaton $\automaton$. The states of $\automaton$ are of the form $(i,q)$, with start state $(0,s)$ and accept states $(m,q)$ for all $q \in \states$. There is a transition $(i,q) \xrightarrow{\stacksymbol{X}/w} (j,r)$ for every item $\myitem{i}{q}{X}{j}{r}$ with weight $w$. Then if an accepting run of $\automaton$ scans $\stackseq$ and ends in state $(m,q)$ with weight $w$, then $\pushdown$ can be in configuration $(q,\stackseq)$ with weight~$w$.

\subsection{Complexity Analysis and Speedup}
\label{sec:analysis_speedup}

For comparison with our algorithm, we show the original algorithm of \citet{lang-1974-deterministic} in \cref{fig:lang}. It has items of the form $\myitem{i}{q}{XY}{j}{r}$, which stands for push computations that start with $X$ as the top stack symbol and push a $Y$ on top of it. 

Our algorithm stores a weight for each item $\myitem{i}{p}{X}{j}{q}$,
giving a space complexity of $\mathcal{O} \left(n^2 |\states|^2 |\stackalphabet|. \right)$ 
This is more efficient than Lang's algorithm, which requires $\mathcal{O} \left(n^2  |\states|^2  |\stackalphabet|^2 \right)$ space. 

\begin{figure} \small
\let\epsilon\varepsilon
\newcommand{\langitem}[6]{\myitem{#6}{#4}{#5#2}{#3}{#1}}
\vspace*{-2ex}
\begin{equation*}
\begin{array}{@{}cc@{}}
\multicolumn{2}{@{}l}{\text{Item form}} \\
\langitem{q}{Y}{j}{p}{X}{i} & \begin{gathered} 0 \leq i < j \leq n \\ p, q \in \states; \stacksymbol{X}, \stacksymbol{Y} \in \stackalphabet \end{gathered} \\
\multicolumn{2}{@{}l}{\text{Inference rules}} \\
\dfrac{}{\langitem{\startstate}{\$}{0}{\startstate}{\$}{0}} \\[2ex]
\dfrac{\langitem{r}{Y}{j\compactminus|a|}{p}{X}{i}}{\langitem{q}{Y}{j}{p}{X}{i}} & \begin{gathered} \pdaEdge{w}{r}{a}{q}{\epsilon}{\epsilon} \\ \str\slice{j\compactminus|a|}{j} = a \end{gathered} \\
\dfrac{\langitem{p}{X}{i}{r}{Z}{k}} {\langitem{q}{Y}{j}{p}{X}{i}} & \begin{gathered} \pdaEdge{w}{p}{a}{q}{\epsilon}{Y} \\ \str\slice{i}{j} = a 
\end{gathered} \\
\dfrac{\langitem{r}{Y}{k}{p}{X}{i}\quad\langitem{s}{Z}{j\compactminus|a|}{r}{Y}{k}}{\langitem{q}{Y}{j}{p}{X}{i}} & \begin{gathered} \pdaEdge{w}{s}{a}{q}{Z}{\epsilon} \\ \str\slice{j\compactminus|a|}{j} = a \end{gathered} \\
\dfrac{\langitem{r}{Z}{j\compactminus|a|}{p}{X}{i}}{\langitem{q}{Y}{j}{p}{X}{i}} & \begin{gathered} \pdaEdge{w}{r}{a}{q}{Z}{Y} \\ \str\slice{j\compactminus|a|}{j} = a \end{gathered} \\
\multicolumn{2}{@{}l}{\text{Goal}} \\
\langitem{\finalstate}{\$}{n}{\startstate}{\$}{0}
\end{array}
\end{equation*}
\caption{Deductive system for Lang's algorithm.}
\label{fig:lang}
\end{figure}

Computing the weight of each new item requires, in the worst case (the inference rule for $2$-pop transitions), iterating over stack symbols $Y, Z \in \stackalphabet$, indices $j \in \range{0}{n}$ and states $q,r \in \states$, resulting in a runtime of $\mathcal{O} (n  |\states|^2  |\stackalphabet|^2)$ per item. 
So the algorithm has a runtime of $\mathcal{O}(n^3  |\states|^4  |\stackalphabet|^3)$, the same as Lang's algorithm.

This runtime can be improved by splitting the inference rule for $2$-pop transitions into two rules:\footnote{The $\backslash$ operator, which is just a punctuation mark and does not require any particular interpretation, was chosen to evoke the $\backslash$ in categorial grammar (using Lambek's ``result on top'' convention): $Y\backslash X$ is an $X$ missing a $Y$ on the left.}
\begin{equation*} \small
\renewcommand{\arraystretch}{2.5}
\begin{array}{@{}cc@{}}
\dfrac{\myitem{k}{r}{Z}{j\compactminus|a|}{s}}{\buhookitem{k}{r}{X}{Y}{j}{q}} &\begin{gathered} \pdaEdge{w}{s}{a}{q}{YZ}{X} \\ \str\slice{j\compactminus|a|}{j} = a \end{gathered} \\
\dfrac{\myitem{i}{p}{Y}{k}{r} \quad \buhookitem{k}{r}{X}{Y}{j}{q}}{\myitem{i}{p}{X}{j}{q}}
\end{array}
\end{equation*}

The first rule has $\mathcal{O}(n^2 |\states|^3 |\stackalphabet|^3)$ instantiations and the second rule has $\mathcal{O}(n^3 |\states|^3 |\stackalphabet|^2)$. So, although we have lost the space-efficiency gain, the total time complexity is now in $\mathcal{O}((n^3 |\stackalphabet|^2 + n^2 |\stackalphabet|^3) |\states|^3)$, a speedup of a factor of more than $|\states|$. We show in \cref{sec:appendix-fast-bottom-up-algorithm} an alternative deductive system that achieves a similar speedup.

Furthermore, Lang's algorithm only works on \simple{} PDAs. To make the algorithms directly comparable, we can assume in the $2$-pop, $1$-push case that $X=Y$. This reduces the space complexity by a factor of $|\stackalphabet|$ again. Moreover, it reduces the number of instantiations of the inference rules above to $\mathcal{O}(n^2 |\states|^3 |\stackalphabet|^2)$ and $\mathcal{O}(n^3 |\states|^3 |\stackalphabet|)$, respectively. So the total time complexity is in $\mathcal{O}(n^3 |\states|^3 |\stackalphabet|^2)$, which is a speedup over Lang's algorithm by a factor of $|\states| \cdot |\stackalphabet|$.

\section{Stringsums in Top-down WPDAs}
\label{sec:algo-top-down}

Stringsums of weighted top-down WPDAs can be computed by the left/right mirror image of our bottom-up algorithm.
Instead of finding push computations, this algorithm finds
\defn{pop computations}, which decrease (rather than increase) the stack size by exactly one.
\begin{defin}[Pop computation]
Let $\pushdown$ be a top-down WPDA and $\str \in \alphabet^*$ an input string. 
A \defn{pop computation} of type $\myitem{i}{p}{X}{j}{q}$, where $0 \le i \le j \le |\str|$, $p, q \in \states$, and $X \in \stackalphabet$, is a run $\arun = \pdaconfig{\stackseq_0}{q_0}, \ldots, \pdaconfig{\stackseq_m}{q_m}$ that scans $\str\slice{i}{j}$, where $q_0 = p$, $q_m = q$, $\stackseq_0 = \stackseq_m X$, and for all $l < m$, $|\stackseq_l| \geq |\stackseq_0|$.
\end{defin}


\begin{figure} \small
\vspace*{-2ex}
\begin{equation*}
\begin{array}{@{}cc@{}}
\multicolumn{2}{@{}l}{\text{Item form}} \\
\myitem{i}{p}{\stacksymbol{X}}{j}{q} & \begin{gathered} 0 \leq i < j \leq n \\ p, q \in \states; \stacksymbol{X} \in \stackalphabet \end{gathered} \\
\multicolumn{2}{@{}l}{\text{Inference rules}} \\
\dfrac{\myitem{i\compactplus|a|}{r}{\stacksymbol{Y}}{k}{s} \quad \myitem{k}{s}{\stacksymbol{Z}}{j}{q}} {\myitem{i}{p}{\stacksymbol{X}}{j}{q}} &\begin{gathered} \pdaEdge{w}{p}{a}{r}{\stacksymbol{X}}{\stacksymbol{Y} \stacksymbol{Z}} \\ \str\slice{i}{i\compactplus|a|} = a \end{gathered} \\[3ex]
\dfrac{\myitem{i\compactplus1}{r}{\stacksymbol{Y}}{j}{q}}{\myitem{i}{p}{\stacksymbol{X}}{j}{q}} &\pdaEdge{w}{p}{\str_{i\compactplus1}}{r}{\stacksymbol{X}}{\stacksymbol{Y}} \\ \dfrac{}{\myitem{i}{p}{\stacksymbol{X}}{j}{q}} &\begin{gathered} \pdaEdge{w}{p}{\str_{i\compactplus1}}{q}{\stacksymbol{X}}{\varepsilon} \\ 
i = j-1 \end{gathered} \\
\multicolumn{2}{@{}l}{\text{Goal}} \\
\myitem{0}{\startstate}{\stacksymbol{S}}{n}{\finalstate}
\end{array}
\end{equation*}
\caption{Deductive system for stringsums of top-down WPDAs in normal form.}
\label{algo:fast-top-down-parsing}
\end{figure}

\Cref{algo:fast-top-down-parsing} shows the inference rules used by the dynamic program, which are the mirror image of the rules in \cref{algo:fast-bottom-up-parsing}. Each item $\myitem{i}{p}{X}{j}{q}$, which stands for a set of pop computations, is derived using a transition and items corresponding to pop computations that happen \emph{later} in the run.

\subsection{Comparison with Lang's algorithm}

Since top-down PDAs are more standard, and the only direct PDA stringsum algorithm in the literature is Lang's algorithm, it might have seemed natural to extend Lang's algorithm to top-down PDAs, as is done by \citet{dusell-chiang-2020-learning}. Like Lang's algorithm, their algorithm has items of the form $\myitem{i}{q}{XY}{j}{r}$, but unlike Lang's algorithm, it requires the $X$ in order to support 1-pop, 2-push transitions.
As a result, their algorithm has space complexity $\mathcal{O}(n^2 |\states|^2 |\stackalphabet|^2)$ and time complexity $\mathcal{O}(n^3 |\states|^4 |\stackalphabet|^3)$, like Lang's algorithm.
But if they had used our algorithm for top-down WPDAs, using pop computations, they would have saved a factor of $|\stackalphabet|$ space, and because their 1-pop, 2-push transitions never change the popped symbol, they would have also saved a factor of $|\states| \cdot |\stackalphabet|$ time.

\subsection{Experiment}

To give a concrete example, we consider the renormalizing nondeterministic stack RNN (RNS-RNN) of \citet{dusell-chiang-2022-learning}, which uses Lang's algorithm (\cref{fig:lang}) on a top-down PDA\@. Since the RNN must process the string from left to right, we cannot use the bottom-up stringsum algorithm, but we can still apply the speedup of \cref{sec:analysis_speedup}, splitting the 1-pop, 0-push rule of \cref{fig:lang} into two rules.
Again, this decreases the time complexity from $\mathcal{O}(n^3  |\states|^4  |\stackalphabet|^3)$ to $\mathcal{O}((n^3 |\stackalphabet|^2 + n^2 |\stackalphabet|^3) |\states|^3)$. When we compare the two implementations on a corpus of strings whose lengths were drawn from $[40, 80]$ on a NVIDIA GeForce RTX 2080 Ti GPU, when $|\states| = 5$ and $|\stackalphabet| = 3$, the new version is about 10 times as fast (Figure~\ref{fig:rnsrnn}).

\begin{figure}
\centering

\pgfplotsset{legend pos={north west},height=1.5in,width=3in,every axis y label/.append style={at={(0,0.5)},yshift=1.5em}}
\tikzset{-}
\begin{tabular}{r}
\begin{tikzpicture}
\begin{axis}[ylabel={time (s)}]
\addplot[myblue,mark=*] coordinates { (1,0.56) (2,0.77) (3,1.87) (4,4.86) (5,11.26) };
\addlegendentry {original}
\addplot[myred,mark=square*] coordinates { (1,0.63) (2,0.63) (3,0.71) (4,0.85) (5,1.13) };
\addlegendentry {speedup}
\end{axis}
\end{tikzpicture}
\\
\begin{tikzpicture}
\begin{axis}[xlabel={states $|\states|$},ylabel={space (GB)}]
\addplot[myblue,mark=*] coordinates { (1,.006) (2,.080) (3,.385) (4,1.195) (5,2.894) };
\addlegendentry {original}
\addplot[myred,mark=square*] coordinates { (1,.003) (2,.019) (3,.057) (4,.125) (5,.233) };
\addlegendentry {speedup}
\end{axis}
\end{tikzpicture}
\end{tabular}
\caption{Applying our speedup to the RNS-RNN, which uses Lang's algorithm adapted to top-down PDAs, yields dramatic time and space savings.}
\label{fig:rnsrnn}
\end{figure}

\subsection{Comparison with CFG/CKY}

We also compare our stringsum algorithm with converting a top-down PDA to a CFG and computing stringsums using CKY\@.
The usual conversion from top-down normal form PDAs to CFGs \citep{hopcroft-2006-introduction} creates a CFG with $O(|\states|^2 |\stackalphabet|)$ nonterminal symbols, so CKY would take $O(n^3 |\states|^6 |\stackalphabet|^3)$ time. Our algorithm thus represents a speedup of more than $|\states|^3 $. Of course, various optimizations could be made to improve this time, and in particular there is an optimization \citep{eisner-blatz-2007} analogous to the speedup in \cref{sec:analysis_speedup}.


\section{\Runsum{}s in Bottom-up WPDAs}
\label{sec:runsums}

We can use a notion of push computation similar to \cref{def:push-computation} to derive a space-efficient algorithm for computing \runsum{}s in bottom-up WPDAs. The item $\runsumitem{p}{\stacksymbol{X}}{q}$ stands for runs from state $p$ to state $q$ that push the symbol $\stacksymbol{X}$ on top of the stack while leaving the symbols underneath unchanged. 

\begin{defin}[Push computation]
Let $\pushdown$ be a bottom-up WPDA\@. A \defn{push computation} of type $\runsumitem{p}{X}{q}$, where $p, q \in \states$, and $X \in \stackalphabet$, is a run $\arun = \pdaconfig{\stackseq_0}{q_0}, \ldots, \pdaconfig{\stackseq_n}{q_n}$, where $q_0 = p$, $q_n = q$, $\stackseq_n = \stackseq_0 \stacksymbol{X}$, and for all $i>0$, $|\stackseq_i| \geq |\stackseq_n|$. 
\end{defin}

These items closely resemble those used for computing stringsums, but discard the two variables $i, j$ that we used for indexing substrings of the input, as we are interested in computing the total weight of runs that scan \emph{any} string.

\begin{defin}
Let $\runs \Parens{p, \stacksymbol{X}, q}$ be the set containing all push computations from state $p$ to state $q$ that push $\stacksymbol{X}$. The \runsum{} $\runsumitemweight{p}{X}{q}$ is defined as 
\begin{equation*}
    \runsumitemweight{p}{\stacksymbol{X}}{q} = \bigoplus_{\arun \in \runs \Parens{p, \stacksymbol{X}, q}} \normalizer (\arun).
\end{equation*}
\end{defin}


The \runsum{} of a set of push computations can be expressed in terms of other \runsum{}s:
\begin{align*}
    &\runsumitemweight{p}{X}{q} = \bigoplus_{\mathclap{a \in \alphabet \cup \set{\varepsilon}}} 
    \transweight{p}{a}{q}{\varepsilon}{X}
    \\
    &\oplus \bigoplus_{\mathclap{\substack{\stacksymbol{Y} \in \stackalphabet \\ r \in \states \\ a \in \alphabet \cup \set{\varepsilon}}}}
    \runsumitemweight{p}{\stacksymbol{Y}}{r} \otimes 
    \transweight{r}{a}{q}{Y}{X}
    \\
    &\oplus\bigoplus_{\mathclap{\substack{\stacksymbol{Y}, \stacksymbol{Z} \in \stackalphabet \\ r, s \in \states \\ a \in \alphabet \cup \set{\varepsilon}}}}
    \runsumitemweight{p}{\stacksymbol{Y}}{r} \otimes \runsumitemweight{r}{\stacksymbol{Z}}{s} \otimes
    \transweight{s}{a}{q}{YZ}{X}
\end{align*}
In general, \runsum{}s cannot be computed recursively, as $\runsumitemweight{p}{\stacksymbol{X}}{q}$ may rely on \runsum{}s that are yet to be computed. Instead, we assume that $\semiring$ is continuous and solve the system of nonlinear equations using fixed-point iteration or the semiring generalization of Newton's method \citep{esparza-2007-fixed}.

This algorithm computes $\mathcal{O}(|\states|^2  |\stackalphabet|)$ items. An allsum algorithm based on Lang's algorithm would have computed $\mathcal{O}(|\states|^2  |\stackalphabet|^2)$ items; thus we have reduced the space complexity by a factor of $|\stackalphabet|$.

\section{Conclusion}


Our study has contributed several results and algorithms whose weighted CFG analogues have long been known, but have previously been missing for weighted PDAs---a normal form analogous to Chomsky normal form and a stringsum algorithm analogous to weighted CKY\@. But it has also revealed some important differences, confirming that the study of weighted PDAs is of interest in its own right. Most notably, we identified two different normal forms and two corresponding stringsum algorithms (and two \runsum{} algorithms). Since the only existing PDA stringsum algorithm we are aware of, Lang's algorithm, is better suited to bottom-up PDAs, whereas the more standard definition of PDAs is of top-down PDAs, our algorithm for top-down WPDAs fills a significant gap.

\section*{Acknowledgements}

This material is based upon work supported by the National Science Foundation under Grant No.~CCF-2019291. Any opinions, findings, and conclusions or recommendations expressed in this material are those of the author(s) and do not necessarily reflect the views of the National Science Foundation.

\section*{Limitations}

Removal of nullary transitions, while similar to removal of nullary rules from a WCFG, is conceptually more complicated, and while it has the same asymptotic complexity, it's currently unknown how the two would compare in practice. Additionally, our nullary removal construction requires a commutative semiring, while removal of nullary productions from a WCFG does not.

Our algorithm for top-down WPDAs processes a string from right to left, which may be undesirable in some NLP applications and in models of human sentence processing.

\section*{Ethics Statement}

The authors foresee no ethical concerns with the research presented in this paper. 

\bibliography{custom}
\bibliographystyle{acl_natbib}

\clearpage
\appendix

\section{Details of Unary Removal}
\label{sec:appendix-unary-removal}

Since $\unarytable$ is a $|\states| \times |\stackalphabet|$ matrix, computing its transitive closure takes $\mathcal{O}((|\states||\stackalphabet|)^3)$ time. 
However, if we perform nullary removal first, the stack alphabet could grow by a factor of $|\states|^2$ because of the stack symbols $\manynullfused{r}{s}{Z}$, which would seem to make the transitive closure take $\mathcal{O}((|\states|^3|\stackalphabet|)^3)$ time. 

For comparison, if we converted the PDA to a CFG, it would have $\mathcal{O}(|\states|^2|\stackalphabet|)$ nonterminals, so computing the transitive closure of the unary rules would take $\mathcal{O}((|\states|^2|\stackalphabet|)^3)$ time.

But the graph formed by the unary transitions can be decomposed into several strongly connected components (SCCs), many of which are identical, so the transitive closure can be sped up considerably.
Define three matrices for three different forms of unary transitions:
\begin{align*}
\unarytable^1_{ptZ,qsX} &=  \transweight{p}{\varepsilon}{q}{\manynullfused{r}{t}{Z}}{\manynullfused{r}{s}{X}} \\
\unarytable^2_{qpX,qX} &= 
\transweight{q}{\varepsilon}{q}{\manynullfused{p}{p}{X}}{X^\nonull} \\
\unarytable^3_{pY,qX} &= \transweight{p}{\varepsilon}{q}{Y^\nonull}{X^\nonull}.
\end{align*}
There are no transitions of the form $\pdaEdge{w}{p}{\varepsilon}{q}{Y^\nonull}{\manynullfused{r}{s}{X}}$.
Note that in the first equation, the transition weight does not depend on $r$, so $r$ does not occur on the left-hand side.
Then let
\begin{align*}
V &= {\unarytable^1}^* \, \unarytable^2 \, {\unarytable^3}^*
\end{align*}
so that $V_{psY,qX}$ is the total weight of runs from configurations of the form $\pdaconfig{\manynullfused{r}{s}Y}{p}$ to configurations of the form $\pdaconfig{X^\nonull}{q}$.

Finally, we remove the unary transitions and modify the non-unary transitions as follows:
\begin{equation*}
\begin{array}{@{}ll@{}}
\text{For each non-unary:} & \text{Replace with:} \\
\pdaEdge{w}{p}{a}{q}{\stackseq}{\manynullfused{r}{s}{X}} & \pdaEdge{w \otimes \unarytable^{1*}_{qsX,tuY}}{p}{a}{t}{\stackseq}{\manynullfused{r}{u}{Y}} \\
& \pdaEdge{w \otimes V_{qsX,tY}}{p}{a}{t}{\stackseq}{Y^\nonull} \\[2ex]
\pdaEdge{w}{p}{a}{q}{\stackseq}{X^\nonull} & \pdaEdge{w \otimes \unarytable^{3*}_{qX,rY}}{p}{a}{r}{\stackseq}{Y^\nonull}
\end{array}
\end{equation*}

Since $V$ can be computed in $\mathcal{O}((|\states|^2|\stackalphabet|)^3)$ time, the asymptotic time complexity of removing nullary and unary transitions is the same when performed directly on the WPDA as compared with converting to a WCFG and removing nullary and unary rules.

\section{Fast Bottom-up Stringsum Algorithm}
\label{sec:appendix-fast-bottom-up-algorithm}

\cref{algo:sped-up-bottom-up-parsing} shows an alternative deductive system for parsing in bottom-up WPDAs. The algorithm that can be derived from this deductive system achieves a runtime improvement by a factor of $|\states|$ and has the same space complexity as Lang's algorithm. This algorithm, however, does not achieve further time and space complexity improvements on the special type of automaton used by Lang.


\begin{figure} \small
\vspace*{-2ex}
\begin{equation*}
\begin{array}{@{}cc@{}}
\multicolumn{2}{@{}l}{\text{Item form}} \\
\myitem{i}{p}{\stackseq}{j}{q} & \begin{gathered} 0 \leq i < j \leq n; p, q \in \states \\ \stackseq \in \stackalphabet^*; |\stackseq| \in \range{1}{2} \end{gathered} \\
\multicolumn{2}{@{}l}{\text{Inference rules}} \\[2ex]
\dfrac{\myitem{i}{p}{\stacksymbol{Y}}{k}{r} \quad \myitem{k}{r}{\stacksymbol{Z}}{j'}{s}}{\myitem{i}{p}{\stacksymbol{Y} \stacksymbol{Z}}{j'}{s}} \\[3ex]
\dfrac{\myitem{i}{p}{\stacksymbol{Y} \stacksymbol{Z}}{j\compactminus|a|}{s}}{\myitem{i}{p}{\stacksymbol{X}}{j}{q}} &\begin{gathered} \pdaEdge{w}{s}{a}{q}{\stacksymbol{Y}\stacksymbol{Z}}{\stacksymbol{X}} \\ \str\slice{j\compactminus|a|}{j} = a \end{gathered} \\[3ex]
\dfrac{\myitem{i}{p}{\stacksymbol{Y}}{j\compactminus1}{r}}{\myitem{i}{p}{\stacksymbol{X}}{j}{q}} & \pdaEdge{w}{r}{\str_j}{q}{\stacksymbol{Y}}{\stacksymbol{X}} \\ \dfrac{}{\myitem{i}{p}{\stacksymbol{X}}{j}{q}} &\begin{gathered} \pdaEdge{w}{p}{\str_j}{q}{\varepsilon}{\stacksymbol{X}} \\ 
j = i+1 \end{gathered} \\
\multicolumn{2}{@{}l}{\text{Goal}} \\
\myitem{0}{\startstate}{\stacksymbol{S}}{n}{\finalstate}
\end{array}
\end{equation*}
\caption{Deductive system corresponding to the alternative sped-up algorithm for stringsums in bottom-up WPDAs in normal form.}
\label{algo:sped-up-bottom-up-parsing}
\end{figure}


\end{document}